\def\eqref#1{equation~\ref{#1}}
\def\1{\bm{1}}
\def\vmu{{\bm{\mu}}}
\def\va{{\bm{a}}}
\def\vb{{\bm{b}}}
\def\vs{{\bm{s}}}
\def\vx{{\bm{x}}}
\def\vy{{\bm{y}}}
\def\mA{{\bm{A}}}
\def\mB{{\bm{B}}}
\def\mI{{\bm{I}}}
\def\mP{{\bm{P}}}
\def\mW{{\bm{W}}}
\DeclareMathAlphabet{\mathsfit}{\encodingdefault}{\sfdefault}{m}{sl}
\SetMathAlphabet{\mathsfit}{bold}{\encodingdefault}{\sfdefault}{bx}{n}
\def\sC{{\mathbb{C}}}
\newcommand{\E}{\mathbb{E}}
\newcommand{\R}{\mathbb{R}}
\newcommand{\Var}{\mathrm{Var}}
\newcommand{\stcomment}[1]{}
\newtheorem{theorem}{Theorem}[section]
\newtheorem{property}[theorem]{Property}
\newtheorem{definition}{Definition}
\title{Merging LoRAs like Playing LEGO: Pushing the Modularity of LoRA to Extremes Through Rank-Wise Clustering}
\author{
Ziyu Zhao\textsuperscript{1}\textsuperscript{2},
Tao Shen\textsuperscript{1},
Didi Zhu\textsuperscript{1},
Zexi Li\textsuperscript{1},
Jing Su\textsuperscript{3},
Xuwu Wang\textsuperscript{3},
Kun Kuang\textsuperscript{1},
Fei Wu\textsuperscript{1}\textsuperscript{2} \\
\textsuperscript{1}Zhejiang University,
\textsuperscript{2}Shanghai Innovation Institute,
\textsuperscript{3}ByteDance Inc.\\
\texttt{
    benzhao.styx@gmail.com
}
}
\begin{document}

\maketitle

\begin{abstract}
Low-Rank Adaptation (LoRA) has emerged as a popular technique for fine-tuning large language models (LLMs) to various domains due to its modular design and widespread availability on platforms like Huggingface. This modularity has sparked interest in combining multiple LoRAs to enhance LLM capabilities. However, existing methods for LoRA composition primarily focus on task-specific adaptations that require additional training, and current model merging techniques often fail to fully leverage LoRA's modular nature, leading to parameter interference and performance degradation. In this paper, we investigate the feasibility of disassembling and reassembling multiple LoRAs at a finer granularity, analogous to assembling LEGO blocks. We introduce the concept of Minimal Semantic Units (MSUs), where the parameters corresponding to each rank in LoRA function as independent units. These MSUs demonstrate permutation invariance and concatenation-summation equivalence properties, enabling flexible combinations to create new LoRAs. Building on these insights, we propose the LoRA-LEGO framework. This framework conducts rank-wise parameter clustering by grouping MSUs from different LoRAs into $k$ clusters. The centroid of each cluster serves as a representative MSU, enabling the assembly of a merged LoRA with an adjusted rank of $k$. Additionally, we apply a dual reweighting strategy to optimize the scale of the merged LoRA. Experiments across various benchmarks demonstrate that our method outperforms existing approaches in LoRA merging.
\end{abstract}

\section{Introduction}
Large Language Models (LLMs) like ChatGPT~\cite{achiam2023gpt} and LLaMA~\cite{touvron2023llama} trained on vast amounts of general data, demonstrate remarkable performance in general tasks. To explore their potential for specialized tasks, adapting LLMs to specific domains by fine-tuning model parameters has become a critical area of research. In this context, Low-rank Adaptation (LoRA)~\cite{hu2021lora}, as a parameter-efficient fine-tuning approach, has gained widespread recognition, also attributed to its modular design~\cite{liu2023moelora,yang2023fingpt,hadi2023survey}. The modular nature of LoRA enables it to serve as \textit{\textbf{plug-and-play}} plugins for LLMs, facilitating the storage and deployment of large collections of LoRAs on platforms like Hugging Face. The extensive availability of LoRAs has sparked considerable interest in combining multiple LoRAs into a unified adapter to significantly extend the capabilities of LLMs~\cite{yadav2024survey,xiao2024configurablefoundationmodelsbuilding,zhao2024loraretriever,huang2023lorahub}.

Previous methods for composing multiple LoRAs have primarily focused on assembling separate LoRAs tailored to specific downstream tasks, which generally require additional training~\cite{wu2023mole,wang2024lora,chronopoulou2023adaptersoup,yadav2024survey,huang2023lorahub}. Model merging~\cite{tang2024fusionbench, yadav2024ties, ilharco2022editing, yang2024model} offers an alternative approach by aggregating the parameters of multiple LoRAs into a unified adapter without extra training, producing a unified LoRA with comprehensive capabilities. However, these methods typically employ element-wise parameter fusion, which can neglect and disrupt the internal semantic structure within LoRA. This disruption potentially leads to parameter interference (as discussed in \S\ref{sec:challenges}), thereby hindering the performance of merged LoRA. 
\textit{This paper approaches LoRA merging from a novel perspective, focusing on the fine-grained modularization of LoRA by decomposing it into independent units, which enables the flexible reconstruction of a unified LoRA with comprehensive capabilities.
}
\begin{figure*}
    \centering
    \vspace{-5mm}
\includegraphics[width=.95\linewidth]{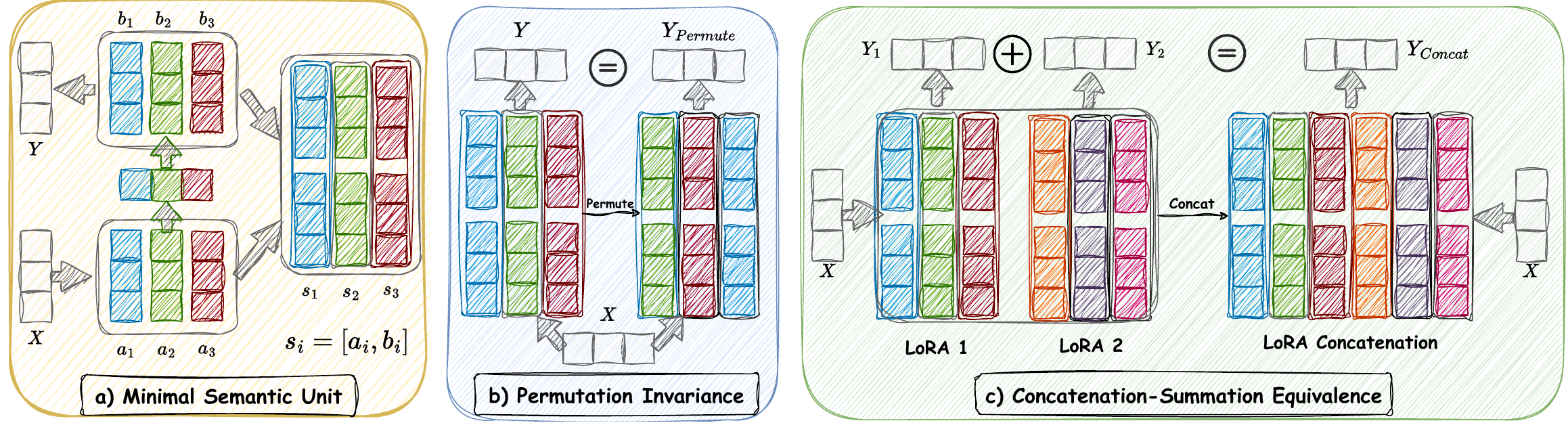}
  \caption{\textbf{Further Modularization of LoRA}: a) Each LoRA can be further modularized into multiple Minimal Semantic Units (MSUs), each corresponding to a row in $\mA$ matrix and a column in matrix $\mB$, differentiated by distinct colors. b) The MSUs within a LoRA display permutation invariance, implying that any rearrangement of the MSUs does not affect the output generated by the LoRA. c) Multiple LoRAs exhibit Concatenation-Summation Equivalence, indicating that the summation of outputs from various LoRAs is equivalent to the output of a singular LoRA constructed by concatenating their MSUs.}
    \vspace{-6mm}
\label{fig:msu_property}
\end{figure*}

As illustrated in Fig.\ref{fig:msu_property}, our motivation for further modularizing LoRA stems from the following insights: 
a) Each rank in LoRA corresponds to a row in the down-projection matrix $\mathbf{A}$ and a column in the up-projection matrix $\mathbf{B}$. Since the calculations for each rank are independent, we consider the parameters associated with each rank as a cohesive entity. We define these entities as \textbf{Minimal Semantic Units (MSUs)}, which serve as the fundamental building blocks of LoRA.
b) Within each LoRA, the MSUs exhibit the property of \textbf{Permutation Invariance}, indicating that any permutation of MSUs within a LoRA does not affect the adapter's output.
c) LoRA exhibits the \textbf{Concatenation-Summation Equivalence} property, which states that summing the outputs from multiple LoRAs is equivalent to the output of a single higher-ranked LoRA constructed by concatenating all the MSUs of these LoRAs.\looseness=-1

In this paper, we introduce a novel method called LoRA-LEGO, which is based on the insight that MSUs act as building blocks that form a LoRA and can be disassembled and reassembled like playing with LEGO. LoRA-LEGO consists of three main steps: 
(1) \textbf{Grouping} MSUs from candidate LoRAs into a MSU pool; 
(2) \textbf{Clustering} the MSU pool into $k$ clusters, where $k$ is the target rank of the merged LoRA;
(3) \textbf{Constructing} the merged LoRA from the centroids of these clusters, with each centroid representing an MSU, thereby setting the merged LoRA’s rank to $k$. 
LoRA-LEGO enables the flexible combination of LoRAs with arbitrary ranks by clustering similar MSUs, at the same time effectively resolving parameter interference while merging. This approach allows for targeted rank adjustments in the merged LoRA to preserve task-specific knowledge.
We also observed that variations in parameter norms and the rank size of the merged LoRA affect the output scale. To address this, we implement a dual reweighting strategy that adjusts both the parameters and the outputs, ensuring optimal scaling for the merged LoRA.

We empirically validate the effectiveness of the proposed LoRA-LEGO in both multi-task~\cite{tang2024fusionbench} and mixed-task~\cite{zhao2024loraretriever} scenarios. Experimental results show that LoRA-LEGO consistently outperforms other methods for LoRA merging, demonstrating notable flexibility and efficiency.
Additionally, LoRA-LEGO can merge heterogeneous LoRAs of varying ranks, surpassing the capabilities of previous model merging methods. Moreover, it can also be applied to individual LoRAs for parameter pruning, revealing that retaining just 50\% of the parameters can achieve performance comparable to the original model.
Our contribution can be summarized as:
\begin{itemize}
[leftmargin=*,itemsep=1pt]
\item We investigate the modularization of LoRA, identifying the MSU as its fundamental building block, which is characterized by permutation invariance and concatenation-summation equivalence properties.
\item We introduce LoRA-LEGO that merges multiple LoRAs in a LEGO-like fashion by grouping, clustering, and reconstructing MSUs to seamlessly combine separate LoRAs.
\item Experimental results show that LoRA-LEGO can flexibly disassemble and reassemble LoRAs of any rank, surpassing other model merging methods in performance. Additionally, LoRA-LEGO can be effectively applied to individual LoRAs, enabling parameter pruning and a substantial reduction in LoRA parameters while maintaining comparable performance.
\end{itemize}

\section{Preliminaries}
\subsection{Low-Rank Adaptation}
Directly fine-tuning LLMs with full parameters is computationally intensive and is not feasible in low-resource scenarios. Based on the idea that only a small number of low-rank parameters need to be fine-tuned for sufficient performance in new domains, \citet{hu2021lora} proposed the Low-Rank Adaptation, where the LoRA module can be combined with the pre-trained parameters in parallel for efficient inference. 

Specifically, given pre-trained weights $\mW_0 \in \R^{d \times k}$ of a sub-module of LLM, the LoRA adds an extra trainable weight matrix as $\mW_0 + \Delta \mW = \mW_0 +\mB\mA$, where $\Delta \mW$ can be decomposed into two smaller matrices $\mB\in \R^{d\times r}$ and $\mA \in \R^{r \times k}$, where $r$ stands for the rank of $\Delta \mW$ and the rank $r \ll min(d,k)$. The forward pass for a layer $\vy = \mW_0 \vx$ can be modified as follows:
\begin{equation}
    \vy = \mW_0 \vx+\Delta \mW \vx = \mW_0 \vx + \mB\mA \vx,
\end{equation}
where $\vx \in \R^d$ is the input and the $\vy \in \R^d$ denote the output.

\subsection{Further modularization of LoRA}
Before delving into the issue of LoRA merging, it is imperative to present several pivotal insights and definitions that could serve as fundamental components for constructing a LoRA module.
\begin{definition} 
\textbf{Minimum Semantic Unit of LoRA.} 
Let $\mA$ and $\mB$  be matrices in a LoRA module. For each index $i$, define the \textbf{minimum semantic unit} of LoRA as the combined vector $\vs_i = [\va_i, \vb_i]$, where $\va_i$ is the $i$-th row of $\mA$ and $\vb_i$ is the $i$-th row of $\mB^T$ (i.e., the transpose of the $i$-th column of $\mB$).
\end{definition}

In this context, each row of the down-projection matrix $\mA$ and its corresponding column in the up-projection matrix $\mB$ are treated as a cohesive unit, defined as a Minimum Semantic Unit (MSU). Each MSU contributes to a rank of the LoRA, encapsulating a distinct semantic fragment of the LoRA's capacity. Through this definition, LoRAs exhibit the following properties.

\begin{property}
\label{prop:permutation}
\textbf{Permutation Invariance.}
For a LoRA module parameterized by matrices $\mA$ and $\mB$, if the rows of $\mA$ are permuted, then by performing a corresponding permutation of the columns of $\mB$, the product of these matrices remains unchanged. Formally, let $\mP$ be a permutation matrix that satisfies $\mP^T \mP = \mI$, where $\mI$ is the identity matrix. If we permute the rows of $\mA$ to obtain a new matrix $\mA' =  \mP \mA$, and correspondingly permute the columns of $\mB$ to get $\mB' =  \mB\mP^T$, then, $\mB \mA = \mB' \mA'$.
\end{property}
The property of permutation invariance indicates that the arrangement of MSUs within LoRA calculations can be altered without affecting LoRA's output. 

\begin{property}
\textbf{Concatenation-Summation Equivalence.}
Consider two LoRAs, $(\mA_1, \mB_1)$ and $(\mA_2, \mB_2)$, each of rank $r$. Specifically, matrices $\mA_1$ and $\mA_2$ are of size $\mathbb{R}^{r \times d}$, and $\mB_1$ and $\mB_2$ are of size $\mathbb{R}^{d \times r}$. Define the concatenated matrices as:
\[
\mA' = \begin{bmatrix} \mA_1 \\ \mA_2 \end{bmatrix} \in \mathbb{R}^{2r \times d}, \quad \mB' = \begin{bmatrix} \mB_1 & \mB_2 \end{bmatrix} \in \mathbb{R}^{d \times 2r}.
\]
The output vector $\vy$ from the concatenated model is equivalent to the sum of the outputs from each individual LoRA model:
\[
    \vy = \mB' \mA' \vx = (\mB_1 \mA_1 + \mB_2 \mA_2) \vx.
\]
\end{property}
Based on this property, we can synthesize the knowledge from all LoRAs by constructing a new LoRA through the concatenation of all MSUs from each LoRA. The computational result is equivalent to ensembling the outputs of all LoRAs. Based on these insights, we can draw the following conclusions:
\begin{tcolorbox}[width=\linewidth, colback=white!95!black]
Each LoRA can be modularized into multiple MSUs, with each MSU corresponding to a rank within the LoRA. These MSUs can be flexibly permuted and combined to construct a unified LoRA.
\end{tcolorbox}

\begin{figure*}
    \centering
    \vspace{-5mm}
\includegraphics[width=.9\linewidth]{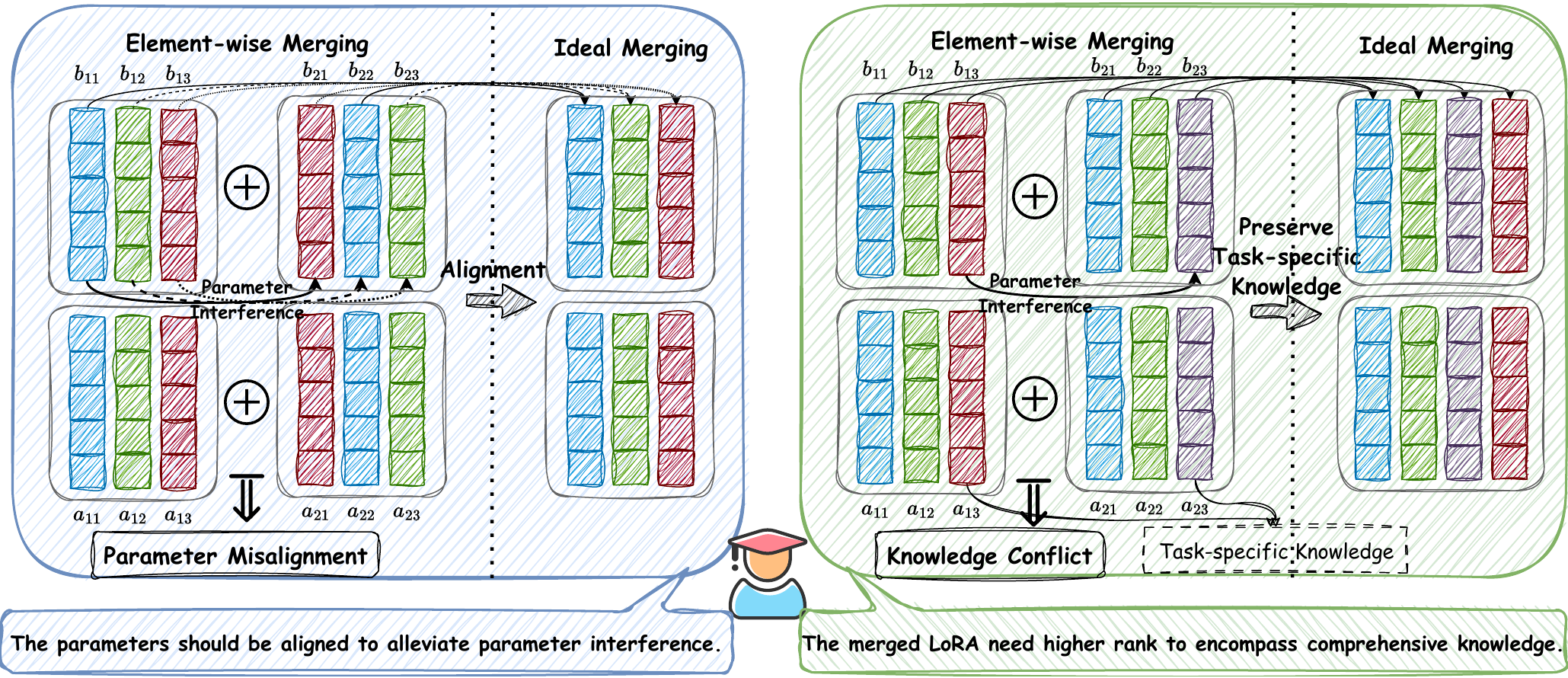}
    \caption{Two sources of parameter interference in LoRA merging. The left part illustrates how parameter misalignment can lead to interference; the right part demonstrates that knowledge conflict in merged LoRA layers can also result in parameter interference.}
    \vspace{-6mm}
    \label{fig:challenges}
\end{figure*}

\subsection{Problem Formulation and Challenges}
\label{sec:challenges}
\begin{wrapfigure}{r}{0.3\linewidth}
\centering
\vspace{-6mm}
\captionof{table}{\textbf{Performance degradation after merging misaligned LoRAs.} ``Original" refers to the performance of the unaltered LoRA, while ``Misaligned" indicates the performance after merging the LoRA with a randomly permuted version of itself.}
\vspace{-3mm}
\label{tab:misalignment}
\resizebox{\linewidth}{!}{%
\begin{tabular}{lcl}
\hline
Task & Original & Misaligned \\
\hline
CoLA & \textbf{61.63} & 60.96 (1.1\% $\downarrow$) \\
MNLI & \textbf{77.46} & 69.49 (10.3\% $\downarrow$) \\
MRPC & 68.00 & \textbf{68.50} (-0.7\% $\downarrow$) \\
QNLI & \textbf{77.25} & 60.44 (21.8\% $\downarrow$) \\
QQP & \textbf{75.83} & 66.94  (11.7\% $\downarrow$) \\
RTE & 52.22 & \textbf{54.44}  (-4.2\% $\downarrow$) \\
SST2 & \textbf{75.74} & 75.52 (0.3\% $\downarrow$) \\
Overall & \textbf{69.73} & 65.18 (5.74\% $\downarrow$) \\
\hline
\end{tabular}
\vspace{-18mm}
}
\end{wrapfigure}
Consider a LLM denoted as $\mathcal{L}$ and a set of $p$ task-specific LoRAs, represented by $\Phi = \{\phi_1, \phi_2, \ldots, \phi_p\}$. Each LoRA $\phi_i$ is specialized for a particular task $\mathcal{T}_i$ and is crafted by incorporating low-rank matrices into different layers of $\mathcal{L}$, thereby tuning the model to better suit $\mathcal{T}_i$. For simplicity of notation, we denote the parameters of these low-rank matrices at any given layer for each LoRA $\phi_i$ as $\mA_i$ and $\mB_i$. The goal of merging these LoRAs is to synthesize a comprehensive LoRA $\phi^\prime$ that not only excels in all tasks encompassed by $\Phi$ but also generalizes well to unseen tasks. We discuss the difference between the LoRA merging setting and the previous model merging setting in the Appendix~\ref{sec:diff_setting}.

A natural approach to performing LoRA merging involves a simple element-wise averaging of the parameters from each LoRA: $\phi^\prime = \frac{1}{p}\sum_{i=1}^p \phi_i$. However, \textbf{\textit{parameter interference}} poses a significant challenge to effective LoRA merging. We identify two potential sources of parameter interference during LoRA merging and demonstrate through experiments that such interference can lead to performance degradation in the merged LoRA.

\begin{wrapfigure}{r}{0.45\linewidth}
\centering
\vspace{-4mm}
\captionof{table}{\textbf{Parameter interference due to knowledge conflict.} ``Tuning MSU" indicates the performance after tuning the added MSU for each task. ``Avg MSU" denotes the performance achieved by directly merging these task-specific MSUs. ``Concat MSU" represents the performance after concatenating these task-specific MSUs.}
    \vspace{-4mm}
\label{tab:rank_shirinking}
\resizebox{\linewidth}{!}{%
\begin{tabular}{lccc}
\toprule
Task & Tuning MSU & Avg MSU & Concat MSU \\
\hline
MNLI & 86.17 & 46.24 (46.35\%$\downarrow$) & 81.36 (5.58\%$\downarrow$) \\
MRPC & 87.25 & 64.75 (25.78\%$\downarrow$) & 81.25 (6.88\%$\downarrow$) \\
Overall & 86.71 &  55.49 (36.06\%$\downarrow$) & 81.31 (6.23\%$\downarrow$) \\ 
\bottomrule
\end{tabular}
}
    \vspace{-4mm}
\end{wrapfigure}
The first cause of parameter interference stems from \textbf{\textit{parameter misalignment}} in LoRAs, as depicted in the left part of Fig.\ref{fig:challenges}. Accoding to Property~\ref{prop:permutation}, the MSUs of each LoRA can be permuted arbitrarily without affecting the functionality of the LoRA module. However, misalignment of MSU parameters when merging LoRAs can result in parameter interference. To investigate the impact of parameter misalignment on model performance, we conducted a controlled experiment using the Llama-2-7b model, training LoRAs on different tasks. For the parameters $\mA$ and $\mB$ of a task, we randomly generated a permutation matrix $\mP$ and adjusted the parameters to $\mA' = (\mA + \mP\mA) / 2$ and $\mB' = (\mB + \mB\mP^T) / 2$. \emph{This adjustment simulates the merging of two identical LoRAs with misaligned parameters.} The results, presented in Tab.\ref{tab:misalignment}, indicate that parameter misalignment can lead to a decline in model performance, with some tasks experiencing significant performance degradation. Therefore, ideal merging entails alignming MSUs during LoRA merging to mitigate parameter interference.

Another source of parameter interference stems from \textbf{\textit{knowledge conflict}} during LoRA merging. As depicted on the right side of Fig.\ref{fig:challenges}, knowledge conflict occurs when the merged LoRA lacks sufficient parameter space to encapsulate the comprehensive knowledge. This deficiency forces the merging of task-specific MSUs, resulting in parameter interference. To investigate the impact of knowledge conflict during LoRA merging, we conducted an experiment to demonstrate the performance degradation resulting from merging task-specific MSUs. 
With a base LoRA trained on the CoLA task, we adapted this LoRA for two new tasks (MNLI and MRPC) by appending an additional MSU to create two separate task-specific LoRAs. Throughout the training process for the new tasks, only the newly introduced MSU for each task was trainable. In this way, the only difference between the LoRAs for MNLI and MRPC was the unique MSU added for each, which encapsulated distinct semantic information tailored to each task. \textit{This setup was designed to create two task-specific LoRAs that differed only in one MSU, allowing us to observe parameter interference when merging these task-specific MSUs.} The results, depicted in Tab.\ref{tab:rank_shirinking}, demonstrated that averaging the task-specific MSUs from the two LoRAs significantly reduced performance on each task. In contrast, maintaining these task-specific MSUs through concatenation preserved the capabilities specific to each original task. This suggests that ideal merging should maintain task-specific MSUs during LoRA merging to prevent knowledge conflict and effectively resolve parameter interference.

\section{Methodology}
\subsection{LoRA-LEGO framework}
\begin{wrapfigure}{r}{0.5\linewidth}
\vspace{-12mm}
  \includegraphics[width=\linewidth]{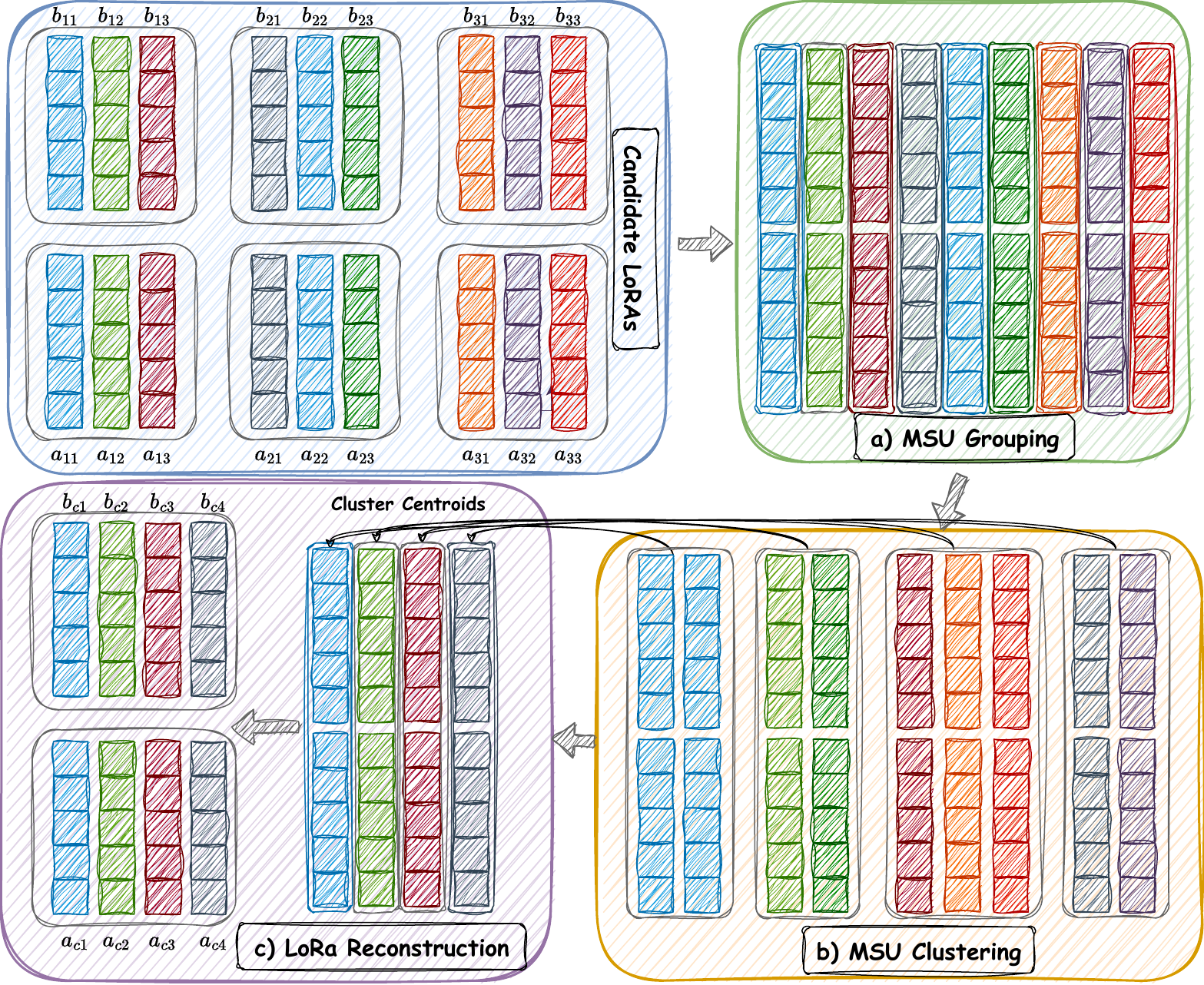}
\vspace{-4mm}
  \caption{\textbf{The LoRA-LEGO framework} merges candidate LoRAs in a manner akin to playing with LEGO by: a) first disassembling LoRAs into multiple MSUs and grouping them into an MSU pool; b) performing MSU clustering to merge similar MSUs; c) reconstructing the merged LoRA from the centroid MSUs to form a cohesive LoRA.}
  \vspace{-8mm}
\label{fig:lora_merging_framework}
\end{wrapfigure}
Based on the motivation that MSUs as the building blocks of LoRA, we can disassemble and reassemble LoRA like playing with LEGO. Here, we propose a flexible and effective method called LoRA-LEGO as shown in Fig.\ref{fig:lora_merging_framework}. This framework is structured around three main procedures: MSU Grouping, MSU Clustering, and LoRA Reconstruction. These steps collectively facilitate the integration of diverse MSUs into a cohesive LoRA, alleviating the parameter interference while LoRA merging.
    
\paragraph{MSU Grouping.}
The initial stage of merging $p$ LoRAs begins by disassembling each LoRA into various MSUs and grouping all the MSUs from each LoRA together. Let $\{\mA_i, \mB_i\}_{i=1}^p$ represent the LoRA parameters of a layer with rank $r_i$. Each LoRA module ${\mA_j, \mB_j}$ contains $r_j$ MSUs, denoted by $\{\vs_{j1}, \vs_{j2}, \ldots, \vs_{jr_j}\}$, where $\vs_{jl} = [\va_{jl}, \vb_{jl}]$ with $\va_{jl}=\mA_j[:,l]$ and $\vb_{jl}=\mB_j[l,:]^T$. The MSU pool $\Phi$, which includes MSUs from all the LoRAs to be merged, is constructed as $\Phi = \bigcup_{j=1}^k \{\vs_{j1}, \vs_{j2}, \ldots, \vs_{jr_j}\}$.

\paragraph{MSU Clustering.}
After grouping the MSUs from different LoRAs, the next step involves regrouping these MSUs into clusters based on their similarities. With the MSU pool $\Phi$, we employed K-means~\cite{kanungo2002efficient} to partition these MSUs into $k$ clusters $\{\sC_1, \sC_2, \ldots, \sC_k\}$ in which each MSU is assigned to the cluster closest to it. This process is described by the following optimization problem:
\begin{equation}
    \underset{\sC}{\text{minimize}} \quad \sum_{i=1}^{k} \sum_{\vs \in \sC_i} \| \vs - \vmu_i \|^2,
\end{equation}
where $\vmu_i$ is the centroid of cluster $\sC_i$. 


\paragraph{LoRA Reconstruction.}
Following the MSU clustering, we rearrange the MSUs into $k$ clusters based on their similarity. The centroids of these clusters, denoted by $\vmu_1, \vmu_2, \ldots, \vmu_k$, are calculated as the average of the MSUs within each cluster. These centroids represent aggregated parameters across the MSUs, encapsulating the generalized semantic information most representative of each cluster. Aggregating within each cluster minimizes information loss compared to directly merging different LoRAs, as the MSUs within a cluster are more similar to each other.

Using these $k$ centroids, we can reconstruct a new LoRA module. Each centroid $\vmu_i$ contributes to a single rank in the merged model, thus the new LoRA model has a rank $k$, where $1 \leq k \leq \sum_{j=1}^p r_j$. The new merged LoRA model is formed by constructing new projection matrices $\mA'$ and $\mB'$ from the centroids:
\begin{equation}
    \mA' = \begin{bmatrix} \va_1 \\ \va_2 \\ \ldots \\ \va_k \end{bmatrix}, \quad \mB' = \begin{bmatrix} \vb_1^T & \vb_2^T & \ldots & \vb_k^T \end{bmatrix},
\end{equation}
where $\va_i$ and $\vb_i$ are extracted from each centroid $\vmu_i=[\va_i, \vb_i]$ as per the MSU definition.
The reconstructed LoRA module $\{\mA', \mB'\}$ addresses parameter interference by aligning MSUs based on their similarity before merging, achieving a flexible rank that encapsulates comprehensive knowledge across various tasks. 
An interesting point is that our method sits between model merging, which fuses multiple identical models into a singular model, and model ensemble, which takes the average of outputs from different modules, achieving a balance between performance and computational efficiency. We provide a detailed discussion of how our method relates to model merging and model ensemble in the Appendix~\ref{sec:connection_composition}.


\subsection{Optimal scale of Merged LoRA}

Given that the rank of the merged LoRA from LoRA-LEGO can range from $1$ to $\sum_{j=1}^p r_j$, the scale of LoRA's output could vary significantly, thereby impacting the performance. We identified two key factors that determine the scale of the output.

\begin{wrapfigure}{r}{0.25\linewidth}
  \vspace{-13mm}
  \includegraphics[width=\linewidth]{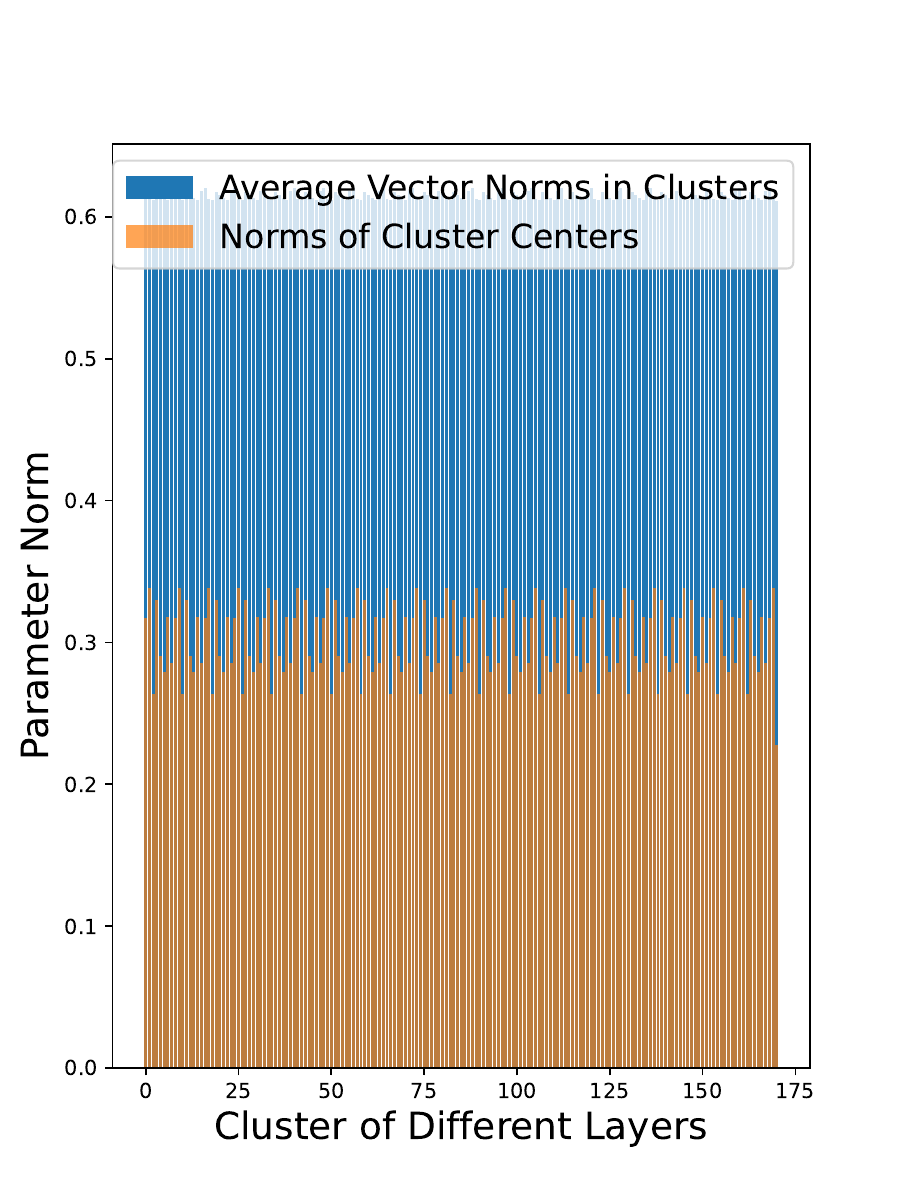}
  \vspace{-8mm}
  \caption{Comparison of \textit{cluster center norm} to \textit{average norm within the cluster}. }
\label{fig:norm_compare}
  \vspace{-12mm}
\end{wrapfigure}
\paragraph{Norm Decay After LoRA Merging.}
As shown in Fig.\ref{fig:norm_compare}, we examine the norms of the parameters after merging (i.e., the centroids of each cluster) compared to the average norms of the parameters within each cluster before merging. We observed that after merging, the parameter norms significantly decrease, potentially affecting the output scale of the LoRA module, since the parameter norm influences the magnitude of the output. This phenomenon can be explained by the \textbf{triangle inequality}~\cite{klement2013triangular}, which states that for any vectors $\vs_i$, $\left\| \sum_{i=1}^p \vs_i \right\| \leq \sum_{i=1}^p \| \vs_i \|$. When computing the centroid $\vmu = \frac{1}{p} \sum_{i=1}^p \vs_i$, its norm satisfies:
\[
\|\vmu\| = \left\| \frac{1}{p} \sum_{i=1}^p \vs_i \right\| \leq \frac{1}{p} \sum_{i=1}^p \| \vs_i \|.
\]
Therefore, the norm of the centroid is less than or equal to the average of the norms of the original vectors, explaining the observed norm decay after merging. The more diverse the vectors within a cluster, the more pronounced this reduction in norm will be.
To compensate for the reduced norm after merging, we perform \textbf{parameter reweighting} by scaling the centroid to match the average norm of the cluster: $\vmu^{\prime} = \frac{\frac{1}{p} \sum_{i=1}^p \|\vs_i\|}{\|\vmu\|} \vmu$.
In our implementation, we use the \textit{infinity norm} for reweighting to ensure stability and robustness in the results.


\paragraph{Variance Expansion with Increased LoRA Rank.}
Another factor influencing the scale of the LoRA output is the rank of the merged LoRA. We conducted experiments to investigate how the rank of the LoRA affects the output scale by merging seven LoRAs with rank $r=8$ and varying the rank $k$ of the merged LoRA (which corresponds to the clusters number in LoRA-LEGO). The frequency histograms of outputs from the first layer of the merged LoRA at various ranks, as shown in Fig.\ref{fig:variance_expansion}, indicate that LoRA outputs approximate a normal distribution centered at zero. We observed that as the rank $k$ increases, the variance of the output also increases. To normalize the output variance, similar to the normalization in the self-attention mechanisms~\cite{vaswani2017attention}, we perform \textbf{output reweighting} for the merged LoRA by the factor $\frac{\sqrt{r}}{\sqrt{k}}$. The following theorem ensures that this rescaling maintains a consistent variance in the LoRA output.

\begin{wrapfigure}{r}{0.35\linewidth}
  \vspace{-8mm}
  \includegraphics[width=\linewidth]{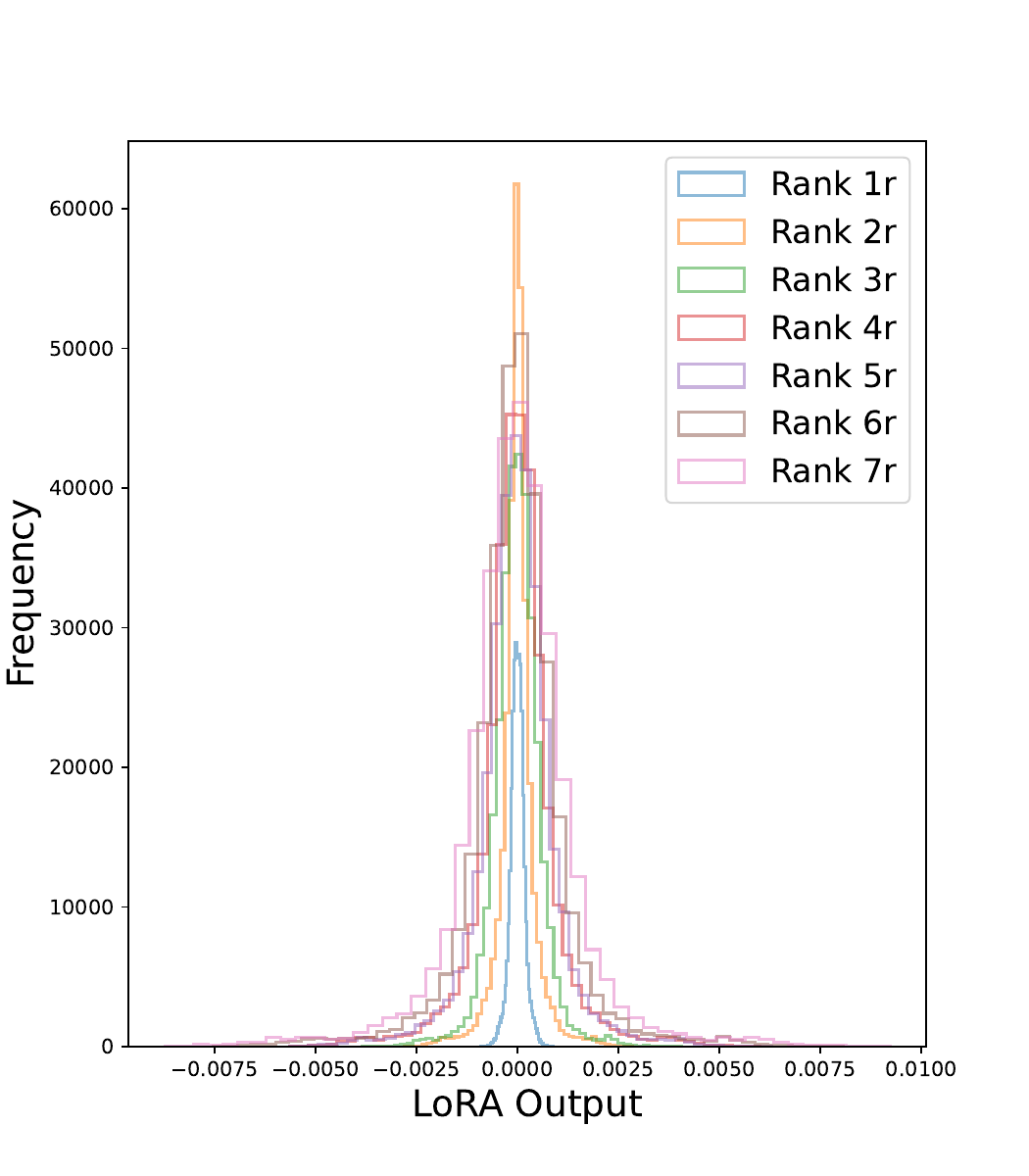}
  \vspace{-6mm}
  \caption{Expansion of variance with increasing rank in merged LoRAs. }
\label{fig:variance_expansion}
  \vspace{-12mm}
\end{wrapfigure}
\begin{theorem}
\label{thm:variance_reweight}
Let $\mA_1 \in \mathbb{R}^{p \times r}$ and $\mB_1 \in \mathbb{R}^{r \times p}$, and $\mA_2 \in \mathbb{R}^{p \times k}$ and $\mB_2 \in \mathbb{R}^{k \times p}$, where all elements of these matrices are independently and identically distributed according to the standard normal distribution $\mathcal{N}(0, 1)$. Then, after scaling the product $\mA_2 \mB_2$ by the factor $\sqrt{r}/\sqrt{k}$, the variances of the  entries of $\mA_1 \mB_1$ and the scaled $\mA_2 \mB_2$ are equal:
\[
\Var \left( \mA_1 \mB_1 \right) = \Var \left( \frac{\sqrt{r}}{\sqrt{k}} \mA_2 \mB_2 \right).
\]
\end{theorem}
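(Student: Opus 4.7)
The plan is to prove the statement by directly computing the variance of a generic entry of each matrix product. The entries of $\mA_1$ and $\mB_1$ are i.i.d. $\mathcal{N}(0,1)$, so I would first fix indices $i,j$ and write
\[
(\mA_1 \mB_1)_{ij} = \sum_{l=1}^{r} (\mA_1)_{il}\,(\mB_1)_{lj},
\]
a sum of $r$ products of independent standard normals.

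Next, for each term in the sum, I would compute $\E[(\mA_1)_{il}(\mB_1)_{lj}] = 0$ by independence and $\Var\bigl((\mA_1)_{il}(\mB_1)_{lj}\bigr) = \E[(\mA_1)_{il}^2]\,\E[(\mB_1)_{lj}^2] = 1$, again using independence and the fact that the second moment of a standard normal is $1$. Since distinct terms in the sum involve disjoint collections of entries of $\mA_1$ and $\mB_1$, the $r$ summands are mutually independent, and their variances add to give $\Var\bigl((\mA_1\mB_1)_{ij}\bigr) = r$. By the identical argument applied to $\mA_2 \in \mathbb{R}^{p \times k}$ and $\mB_2 \in \mathbb{R}^{k \times p}$, I obtain $\Var\bigl((\mA_2\mB_2)_{ij}\bigr) = k$.

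Finally, I would use the scaling property of variance: multiplying a random variable by a constant $c$ multiplies its variance by $c^2$. Taking $c = \sqrt{r}/\sqrt{k}$ yields
\[
\Var\!\left(\tfrac{\sqrt{r}}{\sqrt{k}}(\mA_2\mB_2)_{ij}\right) = \frac{r}{k}\cdot k = r = \Var\bigl((\mA_1\mB_1)_{ij}\bigr),
\]
and since this holds for every entry $(i,j)$, the conclusion of the theorem follows.

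There is no serious obstacle here: the result is a standard second-moment computation relying only on (i) independence of the entries, (ii) zero mean and unit variance of a standard normal, and (iii) the quadratic scaling of variance. The only mild subtlety worth flagging in the write-up is the independence of the $r$ summands inside a single entry of $\mA_1\mB_1$, which follows because each summand uses a distinct column of $\mA_1$ and a distinct row of $\mB_1$; this justifies the variance addition and is the one place where the i.i.d. assumption is essential.
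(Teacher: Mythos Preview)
Your proposal is correct and follows essentially the same argument as the paper's proof: both compute the variance of a generic entry of each product as a sum of independent mean-zero, unit-variance terms (giving $r$ and $k$ respectively), then apply the $c^2$ scaling rule with $c=\sqrt{r}/\sqrt{k}$. Your explicit justification of why the $r$ summands are mutually independent is slightly more detailed than the paper's, but the structure and key steps are identical.
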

The proof of Theorem~\ref{thm:variance_reweight} is detailed in the Appendix~\ref{sec:variance_reweight}.
Overall, to ensure that the LoRA output is correctly scaled, we employ two scaling strategies. First, we reweight the parameters to match the average norms of the parameters within each cluster. Second, we rescale the output of  the merged LoRA for maintaining variance consistency with the original LoRA. These dual scaling strategies enable LoRA-LEGO to deliver enhanced and more robust performance.

\section{Experiments}
Given that LoRA merging is essential for many scenarios, we have opted for two settings: Multi-task learning~\cite{tang2024fusionbench} and Mixed-task settings~\cite{zhao2024loraretriever}. In these settings, we compared various LoRA composition methods to assess the performance of the proposed LoRA-LEGO approach. We selected Llama2-\{7b,13b\} as the base model and trained LoRA for each task with hyperparameters $r=6$ and $\alpha=12$. The evaluation frameworks for multi-task Learning and mixed-task settings are detailed in the subsequent sections, where we provide a comprehensive analysis.

\subsection{Multi-task Learning}
\paragraph{Experiment Setting.}
Multi-task learning aims to merge individually trained LoRAs into a unified model while preserving the performance of each constituent LoRA. Drawing from previous research~\cite{tang2024fusionbench,yadav2024ties,ilharco2022editing}, we merged seven LoRA models, each fine-tuned on Llama2-\{7b,13b\}, for in-domain tasks including Cola, Mnli, MRPR, QNLI, GLUE-QQP, RTE, and SST2. We then assessed the performance of the merged LoRA on these in-domain tasks as well as on two additional out-of-domain tasks, SNLI and WNLI, to evaluate its adaptability and generalization capabilities.

\paragraph{Baseline Methods.}
We compared the proposed method with four post-hoc training-free LoRA composition methods, including (1) Weight Averaging, (2) Ensemble, (3) Task Arithmetic, and (4) Ties-Merging. The details of these LoRA composition methods can be found in the Appendix~\ref{sec:baselines}.


\begin{table}[t]
\caption{Multi-task performance when merging Llama2-\{7b,13b\} (LoRA fine-tuned) models on seven seen tasks and two unseen tasks.}
      \vspace{-3mm}
\centering
\resizebox{.8\linewidth}{!}{
\begin{tabular}{l|ccccccc|cc|c}
\toprule
& \multicolumn{7}{c|}{\textbf{IID Tasks}} & \multicolumn{2}{c|}{\textbf{OOD Tasks}} &  \multirow{2}{*}{\begin{tabular}[c]{@{}c@{}}\textbf{Average}\end{tabular}} \\
\textbf{Method} & \textbf{CoLA} & \textbf{MNLI} & \textbf{MRPC} & \textbf{QNLI} & \textbf{QQP} & \textbf{RTE} & \textbf{SST2} & \textbf{SNLI} & \textbf{WNLI} &  \\
\midrule
\multicolumn{11}{c}{\textit{w/ Llama2-7b}} \\\hline
Task LoRA & 61.63 & 77.46 & 68.00 & 77.25 & 75.83 & 52.22 & 75.74 & \diagbox{}{}
 & \diagbox{}{}
 & \diagbox{}{}
 \\ \hline
Weight Average & 54.42 & 36.09 & \textbf{68.00} & 44.41 & 51.72 & 48.15 & 42.99 & 31.64 & 47.14 & 47.17 \\
Ensemble & \textbf{55.67} & 45.89 & 59.25 & 59.84 & 67.38 & 68.89 & 66.44 & 36.73 & 51.43 & 56.84 \\
Task Arithmetic & 55.48 & 42.15 & 54.25 & 58.94 & 66.43 & 67.78 & 59.54 & 34.08 & 54.29 & 54.77 \\
Ties-Mering & 48.65 & 48.81 & 55.50 & 61.79 & 66.75 & 62.59 & 70.69 & 48.45 & \textbf{61.43} & 58.30 \\ \hline
LoRA-LEGO & 55.48 & \textbf{55.73} & 66.00 & \textbf{62.29} & \textbf{71.07} & \textbf{71.85} & \textbf{73.22} & \textbf{51.36} & 52.86 & \textbf{62.21} \\ \hline
\multicolumn{11}{c}{\textit{w/ Llama2-13b}} \\\hline
Task LoRA & 69.04 & 88.23 & 89.25 & 82.33 & 86.29 & 80.74 & 76.44 & \diagbox{}{}
 & \diagbox{}{}
 & \diagbox{}{}
 \\ \hline
Weight Average & 45.48 & 46.32 & 67.75 & 46.68 & 47.50 & 62.96 & 46.78 & 42.42 & 42.86 & 49.86 \\
Ensemble & 62.50 & 64.64 & 74.75 & 71.81 & 81.35 & \textbf{79.26} & 75.52 & 54.32 & 60.00 & 69.35 \\
Task Arithmetic & \textbf{63.17} & 64.41 & 74.50 & 71.59 & 80.84 & 78.15 & 75.86 & 54.16 & 58.57 & 69.03 \\
Ties-Mering & 58.56 & 64.71 & \textbf{78.75} & \textbf{74.27} & 80.71 & 76.67 & 75.40 & 56.02 & 61.43 & 69.61 \\ \hline
LoRA-LEGO & 59.42 & \textbf{65.40} & 75.50 & 72.29 & \textbf{82.51} & 78.52 & \textbf{75.98} & \textbf{58.54} & \textbf{64.29} & \textbf{70.27} \\
\bottomrule
\end{tabular}
}
\vspace{-5mm}
\label{tab:method_comparison}
\end{table}

\begin{figure*}[tp] 
    \centering
    \includegraphics[width=.9\linewidth]{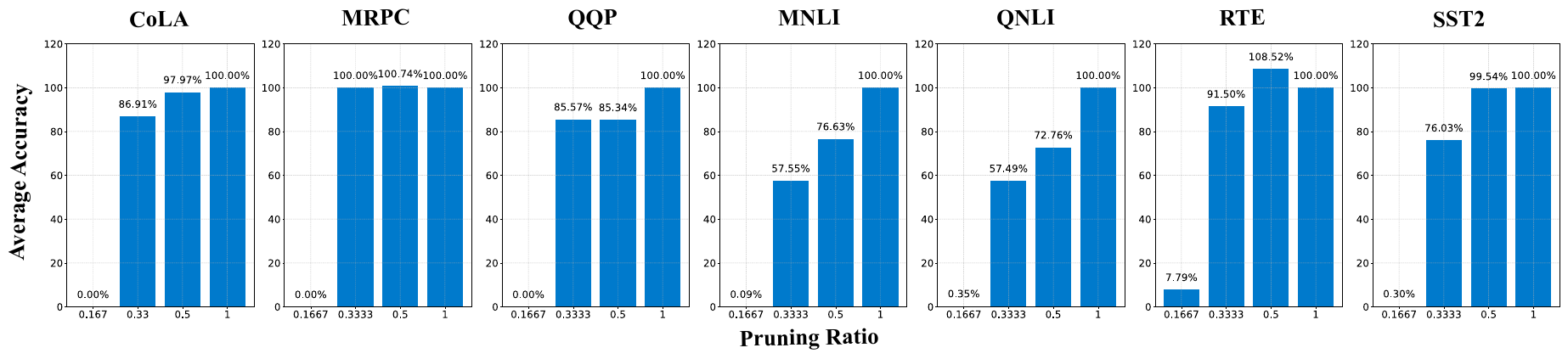}
          \vspace{-4mm}
    \caption{LoRA pruning performance over seven datasets.}
      \vspace{-6mm}
    \label{fig:distillation}
\end{figure*}

\paragraph{Main Results.}
As shown in Tab.~\ref{tab:method_comparison}, our proposed LoRA-LEGO method significantly outperforms the baseline methods on both IID and OOD tasks. Specifically, the Weight Averaging method suffers from significant performance degradation due to parameter interference during LoRA merging. The Ensemble method encounters issues with parameter redundancy, leading to suboptimal performance and slower inference speeds. Model merging methods such as Task Arithmetic and Ties-Merging perform element-wise fusion and fail to adequately address parameter interference in LoRA, resulting in suboptimal performance during the merging process. In contrast, our proposed LoRA-LEGO effectively alleviates parameter misalignment and knowledge conflict through flexible MSU clustering, thereby achieving superior performance compared to other methods. In Appendix~\ref{sec:hete_lora}, we demonstrate that the proposed LoRA-LEGO approach can effectively merge heterogeneous LoRAs, exceeding the capabilities of previous model merging methods.

\begin{wrapfigure}{r}{0.35\linewidth}
  \vspace{-8mm}
  \includegraphics[width=\linewidth]{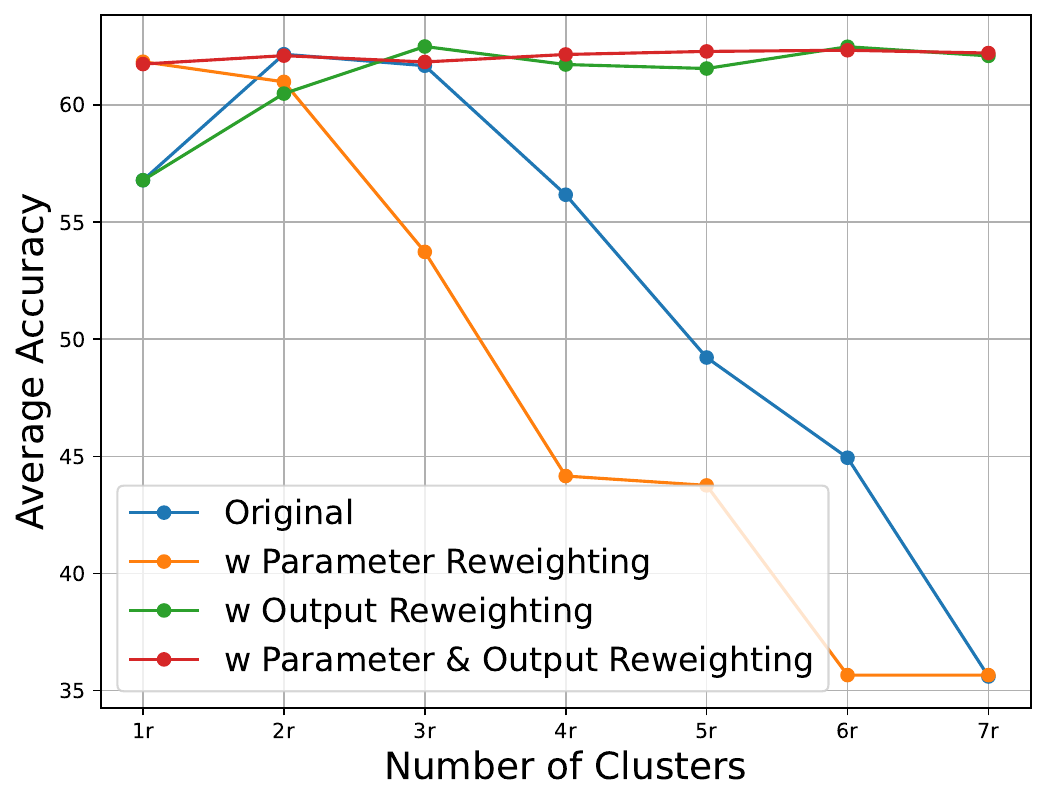}
  \vspace{-6mm}
  \caption{Ablation on scaling strategies. }
\label{fig:scaling}
  \vspace{-8mm}
\end{wrapfigure}

\paragraph{Performance on LoRA Pruning.}
Our method also functions as a LoRA parameter pruning approach. For a single LoRA with rank $r$, LoRA-LEGO allows for selecting $k < r$, effectively reducing the rank to $k$ and pruning the model. As illustrated in Fig.~\ref{fig:distillation}, we evaluate the performance of a single LoRA model after retaining various proportions of its parameters. LoRA-LEGO efficiently compresses model parameters: retaining just 33\% of the parameters preserves 79\% of the original model's capabilities while keeping 50\% maintains 92\% of the performance. This offers new insights into strategies for compressing model parameters, especially those of LoRA.


\paragraph{Ablation of Scaling Strategies.}
We evaluate the effectiveness of two scaling strategies for the merged LoRA by varying the number of clusters for LoRA-LEGO, noting that the cluster number corresponds to the rank of the merged LoRA. As illustrated in Fig.\ref{fig:scaling}, the original computation of LoRA experiences significant performance degradation with increasing rank of the merged LoRA, primarily due to the expansion of variance associated with the increased rank. Additionally, when the rank of the merged LoRA is relatively low, its performance does not reach its optimum due to the degradation of parameter norms. We also present the performance of each scaling strategy and their combination. Applying \textbf{parameter reweighting} can significantly enhance the performance of the merged LoRA when the rank is relatively low; specifically, the performance of a merged LoRA at rank $1r$ improves by 5\%. However, as the rank increases, eliminating norm decay more severely exposes variance expansion because norm decay can alleviate this phenomenon, leading to greater performance degradation. Stabilizing the variance by \textbf{output reweighting} significantly increases performance when the rank is high, although it remains suboptimal due to the decrease of parameter norms. Combining these two scaling strategies yields the best results, demonstrating stable and improved performance across varying ranks of the merged LoRA. After these two scaling strategies are applied, the performance of LoRA-LEGO tends to stabilize; therefore, we use $k=2r$ as the default setting.

\paragraph{Merging Different Number of Tasks.}
We investigated the average performance of the model when merging LoRAs with different numbers of tasks. To better assess the influence of task quantity on our method, we normalized the performance of each task by dividing it by the performance of its respective single-task LoRA and then calculated the mean of these normalized scores. From Fig.\ref{fig:number_task}, it is evident that as the number of merging tasks increases, there is a general decline in the performance of the merged LoRAs. Specifically, direct averaging experiences a steep performance drop due to parameter interference. The Ensemble method also sees a decrease in performance, attributed to parameter redundancy and inconsistencies in the output space. Ties-merging, failing to resolve parameter interference and reliant on hyperparameter selection fully, does not reach optimal performance. LoRA-LEGO, which flexibly addresses parameter interference, experiences a lesser decline in performance with an increasing number of tasks, thereby outperforming the baseline model.

\begin{wrapfigure}{r}{0.35\linewidth}
  \vspace{-20mm}
  \includegraphics[width=\linewidth]{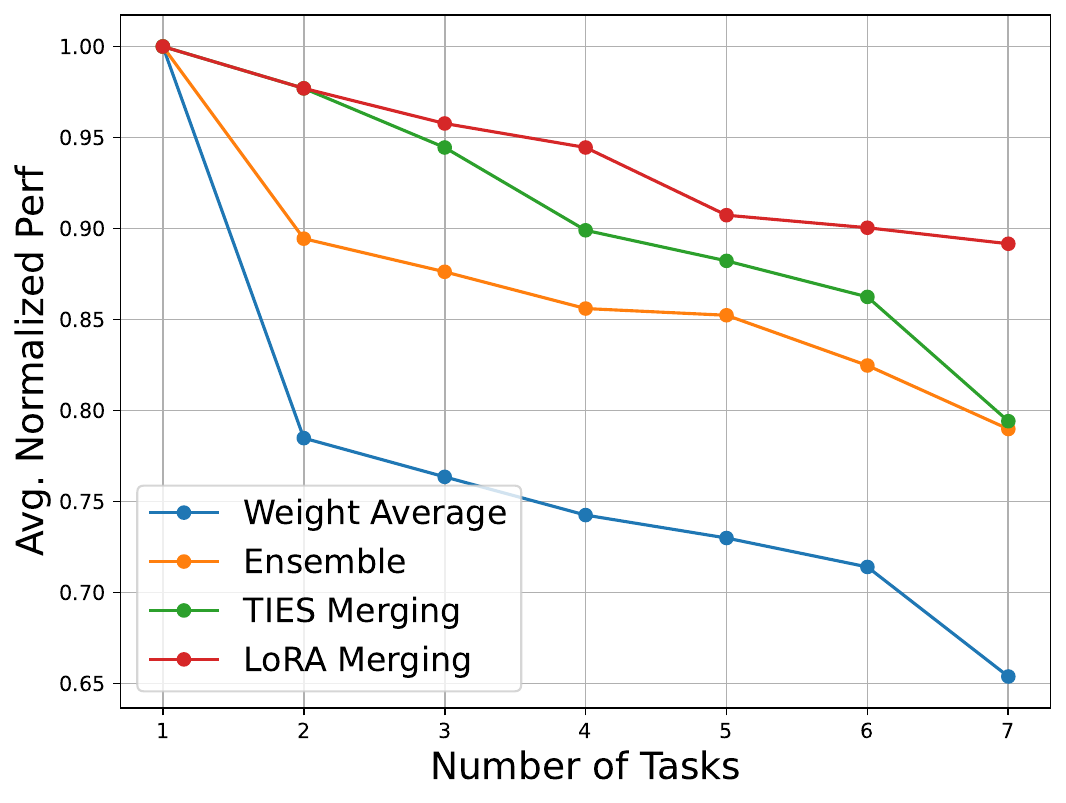}
  \vspace{-8mm}
  \caption{Average performance varying the number of merged tasks. }
\label{fig:number_task}
  \vspace{-8mm}
\end{wrapfigure}

\begin{table}
\caption{The average performance of each task cluster. The performance of perfectly selected corresponding LoRA for each sample is colored in gray. We have bolded the best performance of each task and underlined the best performance in the ``OOD" setting.}
  \vspace{-4mm}
\centering
\resizebox{.72\linewidth}{!}{  
\begin{tabular}{l c c c c c c c c c c c} 
\toprule
\multirow{2}{*}{\textbf{Task}} &
\multirow{2}{*}{\begin{tabular}[c]{@{}c@{}}\textbf{Perfect}\\\textbf{Selection}\end{tabular}} & \multicolumn{2}{c}{\textbf{LoRA-LEGO}} & 
\multicolumn{2}{c}{\textbf{Selection}} & 
\multicolumn{2}{c}{\textbf{Weight Average}} & 
\multicolumn{2}{c}{\textbf{Ensemble}} & 
\multicolumn{2}{c}{\textbf{Tie-Merging}} \\
\cmidrule(lr){3-12}
& & \textbf{IID} & \textbf{OOD} & \textbf{IID} & \textbf{OOD} & \textbf{IID} & \textbf{OOD} & \textbf{IID} & \textbf{OOD} & \textbf{IID} & \textbf{OOD} \\
\midrule
\multicolumn{12}{c}{\textbf{\textit{w/ Llama2-7b}}} \\ \hline
\text{Struct to Text Rouge-1} & \cellcolor[gray]{0.8} 59.1 & 49.6 & \underline{50.5} & \textbf{56.8} & 45.2 & 44.5 & 41.0 & 51.2 & 45.3 & 45.4 & 49.4 \\
\text{Struct to Text Rouge-2} & \cellcolor[gray]{0.8} 36.1 & 25.7 & 26.6 & \textbf{33.6} & 23.2 & 22.6 & 20.2 & 26.3 & 22.9 & 23.9 & \underline{27.4} \\
\text{Struct to Text Rouge-l} & \cellcolor[gray]{0.8} 48.6 & 39.5 & \underline{39.9} & \textbf{46.4} & 35.3 & 34.5 & 31.7 & 41.0 & 35.5 & 36.0 & 39.3 \\
\text{Translation BLEU} & \cellcolor[gray]{0.8} 13.1 & \textbf{12.9} & 12.4 & 12.8 & 12.0 & 12.2 & 12.3 & 12.8 & 12.2 & 14.0 & \underline{13.9} \\
\midrule
\text{COMMONSENSE} & \cellcolor[gray]{0.8} 62.5 & \textbf{60.0} & \underline{60.5} & 55.5 & 46.0 & 51.0 & 48.0 & 61.5 & 50.0 & 55.0 & 59.5 \\
\text{SENTIMENT} & \cellcolor[gray]{0.8} 90.0 & \textbf{90.0} & \underline{91.5} & 89.5 & 89.0 & 79.0 & 78.5 & 89.5 & 90.5 & 82.0 & 81.5 \\
\text{READING Comp.} & \cellcolor[gray]{0.8} 67.3 & \textbf{54.3} & 55.7 & 51.7 & 40.3 & 47.3 & 45.0 & 51.3 & 47.3 & 46.3 & \underline{56.3} \\
\text{CLOSE-BOOK QA} & \cellcolor[gray]{0.8} 45.0 & 47.0 & 48.5 & 40.0 & 43.0 & 41.0 & 37.5 & 45.0 & 48.5 & \textbf{48.0} & \underline{53.5} \\
\text{COREFERENCE} & \cellcolor[gray]{0.8} 52.0 & \textbf{62.0} & \underline{60.0} & 50.0 & 46.0 & 47.0 & 53.0 & 63.0 & 49.0 & 32.0 & 47.0 \\
\text{READ. COOMP. W/ COM} & \cellcolor[gray]{0.8}  69.0 & 66.0 & \underline{65.0} & \textbf{69.0} & 30.0 & 35.0 & 19.0 & 46.0 & 40.0 & 37.0 & 64.0 \\
\text{PARAPHRASE} & \cellcolor[gray]{0.8} 65.5 & \textbf{58.0} & \underline{60.0} & \textbf{58.0} & 45.5 & 45.5 & 44.0 & 56.5 & 45.5 & 18.0 & 38.5 \\
\text{NLI} & \cellcolor[gray]{0.8} 72.3 & \textbf{71.3} & \underline{66.4} & 70.0 & 60.6 & 51.4 & 53.8 & 67.9 & 64.3 & 65.6 & 49.4 \\ \hline
\text{Overall} & \cellcolor[gray]{0.8} 55.4 & \textbf{51.4} & \underline{51.0} & 51.2 & 43.0 & 41.6 & 40.2 & 49.8 & 45.6 & 43.2 & 45.8 \\
\midrule
\multicolumn{12}{c}{\textbf{\textit{w/ Llama2-13b}}} \\\hline
$\text{Struct to Text}_{Rouge-1}$ & \cellcolor[gray]{0.8} 61.0 & 54.2 & 46.0 & \textbf{58.0} & 44.6 & 48.2 & 45.1 & 52.9 & 46.9 & 50.8 & \underline{50.9} \\
$\text{Struct to Text}_{Rouge-2}$  & \cellcolor[gray]{0.8} 37.7 & 29.3 & 24.0 & \textbf{34.9} & 22.8 & 26.0 & 23.5 & 29.1 & 24.6 & 26.2 & \underline{26.1} \\
$\text{Struct to Text}_{Rouge-l}$  & \cellcolor[gray]{0.8} 50.5 & 43.9 & 36.4 & \textbf{47.6} & 34.8 & 38.4 & 35.9 & 42.9 & 36.9 & 41.0 & \underline{40.9} \\
$\text{Translation}_{BLEU}$ & \cellcolor[gray]{0.8} 12.9 & \textbf{14.7} & \underline{14.5} & 12.9 & 12.7 & 14.6 & 14.1 & 14.6 & 14.1 & 11.2 & 11.3 \\
\midrule
COMMONSENSE & \cellcolor[gray]{0.8}  69.5 & \textbf{69.0} & \underline{68.5} & 59.0 & 47.5 & 61.0 & 56.0 & 64.0 & 60.5 & 58.0 & 57.5 \\
SENTIMENT & \cellcolor[gray]{0.8} 90.0 & 91.0 & 90.0 & 90.5 & 91.0 & 87.0 & 83.5 & \textbf{91.5} & \underline{91.5} & \textbf{91.5} & \underline{91.5} \\
READING Comp. & \cellcolor[gray]{0.8} 76.0 & \textbf{62.7} & 53.0 & 60.3 & 48.0 & 56.7 & 49.3 & 60.3 & 51.3 & 54.3 & \underline{54.3} \\
CLOSE-BOOK QA & \cellcolor[gray]{0.8} 64.0 & \textbf{63.0} & \underline{58.0} & 60.0 & 53.0 & 62.0 & 58.0 & 63.0 & 61.0 & 41.5 & 42.0 \\
COREFERENCE & \cellcolor[gray]{0.8} 74.0 & \textbf{77.0} & 62.0 & 75.0 & 65.0 & 55.0 & 59.0 & 76.0 & 64.0 & 63.0 & \underline{63.0} \\
READ. COOMP. W/ COM & \cellcolor[gray]{0.8} 82.0 & \textbf{76.0} & 54.0 & 80.0 & 33.0 & 57.0 & 49.0 & 78.0 & 58.0 & 65.0 & \underline{66.0} \\
PARAPHRASE & \cellcolor[gray]{0.8} 77.5 & \textbf{67.5} & \underline{58.5} & 68.0 & 52.5 & 55.5 & 45.5 & 71.0 & 55.5 & 61.0 & 62.5 \\
NLI & \cellcolor[gray]{0.8} 82.4 & \textbf{78.9} & \underline{76.3} & \textbf{78.9} & 70.2 & 69.8 & 66.4 & 78.1 & 75.7 & 65.7 & 65.7 \\ \hline
Overall & \cellcolor[gray]{0.8} 62.4 & \textbf{58.2} & \underline{52.8} & 57.8 & 47.7 & 51.6 & 47.8 & 57.6 & 52.3 & 50.1 & 50.3 \\
\bottomrule
\end{tabular}
}
\vspace{-6mm}
\label{tab:mix_task_main}
\end{table}

\subsection{Mixed-task Evaluation}

\paragraph{Evaluation Setting.}
Recent studies~\cite{zhao2024loraretriever} have proposed the creation of a LoRA pool from which relevant LoRAs are retrieved for each input to facilitate LoRA composition. We adopt the same setting and construct a LoRA pool for 48 tasks from flan-v2, grouped into 10 task clusters. The evaluation set is constructed by randomly choosing 50 samples from each test set. These samples are then mixed and shuffled to form a unified dataset comprising 2400 data points.

Adopting the LoraRetriever approach~\cite{zhao2024loraretriever}, we initially retrieve the top-3 LoRAs based on the sentence embedding similarities between each input sample and LoRA's few-shot samples. Following this, we engage in LoRA composition and evaluate various strategies. This analysis underscores the versatility and superior performance of LoRA-LEGO in handling more complex scenarios.

\paragraph{Baseline Methods.} For all methods, we employ a consistent evaluation pipeline. For each instance in the evaluation set, we initially retrieve the top-3 LoRA, followed by the composition of LoRA. We compared the following LoRA composition methods: (1) Weight Average, (2) Ensemble, (3) Selection (using the top-1 retrieved LoRA), and (4) Ties-Merging.

\paragraph{Main Results.}
Previous research~\cite{zhao2024loraretriever} has shown that using a retriever to identify LoRA tasks tailored to various inputs is more efficient and effective in personalized service settings. Consequently, we concentrate on how multiple LoRAs can be integrated effectively through LoRA merging after retrieving the top-k LoRAs for each input. We assess the performance of LoRA composition methods in both IID and OOD contexts. ``IID" performance refers to scenarios where all LoRAs are accessible to the retriever. ``OOD" performance, however, involves masking the LoRA associated with the specific task of each test sample during retrieval, preventing any sample from accessing its ideal LoRA. This approach allows us to evaluate the cross-task generalization capabilities of the LoRA composition methods.
Tab.\ref{tab:mix_task_main} demonstrates that LoRA-LEGO surpasses other composition methods in both IID and OOD scenarios by fully eliminating parameter interference. In contrast, baseline LoRA composition methods experience performance degradation due to their inability to completely mitigate parameter interference. Specifically, in IID scenarios, the Selection method excels because the Retriever can choose the most appropriate LoRA from closely related tasks for inference. Building on this, LoRA-LEGO further enhances performance by leveraging the transfer capabilities between different tasks, thereby achieving better results. For OOD scenarios, both Ties-Merging and Ensemble show good performance by harnessing knowledge from a wide array of relevant tasks to tackle OOD tasks. LoRA-LEGO, however, outperforms these methods by effectively addressing parameter interference, allowing for a more comprehensive utilization of diverse LoRA capabilities and achieving superior results in OOD setting.

\section{Related Work}
\paragraph{Model Merging.}
Many works have discussed how to obtain a comprehensive model through model merging from various perspectives. Some works discuss how to find a set of low-loss paths in the parameter space for model parameter interpolation from the perspective of linear mode connectivity~\cite{ainsworth2022git,entezari2021role}. From a similar perspective, we further utilized properties of MSUs, employing clustering algorithms to provide a flexible solution for enhancing the parameter connectivity during LoRA merging. Additionally, many works attempt to coordinate models trained in a decentralized and separated manner through model merging, utilizing their knowledge transfer capabilities to obtain a model with comprehensive abilities~\cite{tang2024fusionbench,don2022cold,yadav2024ties,matena2022merging,jin2022dataless,yang2023adamerging}. Recently, with the rise of large language models, more and more works have focused on how to use model aggregation, especially the aggregation of LoRA~\cite{chronopoulou2023adaptersoup,huang2023lorahub,zhao2024loraretriever,wang2024lora}, to strategically utilize models adapted to multiple domains. These efforts often overlook the parameter interference that occurs during LoRA merging, and some of them require extensive additional training or adaptation. This leads to suboptimal performance in such scenarios or restricts their applicability.

\paragraph{Application of LoRA Merging.}
LoRA merging can be applied in various scenarios. For instance, in multi-task learning~\cite{tang2024fusionbench,don2022cold}, models adapt to different domains in a decentralized manner using LoRA, subsequently acquiring multi-task capabilities through merging. In mixed-task scenarios~\cite{zhao2024loraretriever,zhao2024retrieval}, LoRAs from diverse domain tasks are uploaded to a centralized service platform, where the service retrieves and composes LoRAs to deliver personalized services based on downstream requests. In federated learning~\cite{chen2023openfed,zhang2024towards}, edge devices train LoRAs on private data and upload them to a central server for merging and distribution, enabling iterative optimization through this process.
During the alignment phase, Reinforcement Learning from Human Feedback (RLHF) training is conducted to obtain multiple LoRA models that meet different requirements based on various preferences. Subsequently, personalized alignment models can be provided through parameter interpolation, as discussed in ~\cite{jang2023personalized}.

\section{Conclusion}
In this paper, we address the critical challenge of merging multiple LoRAs, each tailored for distinct tasks, into a unified and comprehensive LoRA. We identify parameter interference as a primary obstacle in merging, with parameter misalignment and knowledge conflict being significant contributors. Our exploration of LoRA's properties reveals several key insights: (1) Each rank within a LoRA operates independently and represents a minimal semantic unit (MSU); (2) MSUs within each LoRA exhibit permutation invariance; (3) MSUs can be concatenated to form a comprehensive LoRA. Building on these insights, we propose LoRA-LEGO, a methodology that aggregates MSUs from all target LoRAs, performs clustering, and uses the centroid of each cluster to create a merged LoRA. Our extensive experimental results validate the effectiveness of the LoRA-LEGO approach.

Potential future work includes exploring alternative distance metrics for LoRA-LEGO, such as optimal transport, to better characterize parameter similarities beyond the standard Euclidean distance. Additionally, further modularization of LoRA could enhance various applications. For example, in federated learning, strategies to minimize communication overhead and expedite model convergence through sharing and aggregating MSUs could be explored. We believe these advancements could significantly benefit a wide range of fields and applications.

\bibliography{iclr2025_conference}
\bibliographystyle{iclr2025_conference}

\appendix
\section{Difference between LoAR Merging Setting and Model Merging Setting}
\label{sec:diff_setting}
Previous work on model merging primarily focused on integrating separately trained models to form a comprehensive system. These methods typically involve reloading LoRA parameters into the original model before merging, which introduces additional overhead by necessitating the reconstruction of a corresponding LLM for each LoRA. In many cases, the goal of LoRA merging is to create a new LoRA that consolidates the capabilities of all involved LoRAs for simplified task-specific usage. In contrast, the LoRA merging setting presented in this paper bypasses the LoRA reload step; it directly merges the LoRA parameters to construct a unified LoRA with comprehensive capabilities.

\section{Connection with Vanilla LoRA Composition Methods}
\label{sec:connection_composition}
The vanilla LoRA composition can be categoried into two types of training-free methods: the model ensembling and model merging~\cite{tang2024fusionbench}. The \textbf{ensemble} strategy involves aggregating the outputs of each submodule within the assembled LoRAs. Let us denote $\mathcal{A} = \{\mA_1, \mA_2, \ldots, \mA_n\}$ and $\mathcal{B} = \{\mB_1, \mB_2, \ldots, \mB_n\}$ as the sets representing submodules within $n$ LoRAs. For an input $\vx_i$, the output derived from the ensemble of LoRAs can be expressed as $\vx_i' = \frac{1}{n} \sum_{j=1}^n \mB_j \mA_j \vx_i$, where $\vx_i'$ denotes the output. The performance of the ensemble of LoRAs tends to be more stable, but it incurs additional computational overhead.
In contrast to the ensemble method, \textbf{model merging} presents an alternative composition strategy. A typical strategy involves employing element-wise fusion of these parameters, represented as $\mA' = \frac{1}{n} \sum_{j=1}^n \mA_j$ and $\mB' = \frac{1}{n} \sum_{j=1}^n \mB_j$. This formulation allows the merged parameters to function similarly to a single LoRA. However, directly merging parameters can lead to performance degradation due to parameter interference.

Our proposed LoRA-LEGO method serves as a bridge between the two strategies, ensuring an optimal balance between computational efficiency and performance. By selectively aligning and fusing MSUs based on their semantic similarity, LoRA-LEGO effectively condenses the most relevant semantic features into fewer clusters. This process allows for the merging of parameters within each cluster, reducing the overall parameter count in a manner similar to the model merging method. By adjusting the number of clusters, LoRA-LEGO can accommodate more parameters for inference, much like the ensemble method. In this way, our method leverages the strengths of both methodologies, ultimately enhancing model performance and inference efficiency.

\section{Details of Baseline Methods}
\label{sec:baselines}
We compare our method with the following baseline: 
\begin{enumerate}
    \item \textbf{Weight Averaging}. This approach averages the parameters across different instances of LoRA, resulting in a new composite LoRA defined as $\mA' = \frac{1}{n} \sum_{i=1}^n \mA_i$ and $\mB' = \frac{1}{n} \sum_{i=1}^n \mB_i$, where $\mA_i$ and $\mB_i$ represent the parameters from the $i$-th instance of the original LoRA models, and $n$ is the number of models being averaged.
    \item \textbf{Ensemble}.  This method averages the outputs from each LoRA, simultaneously activating multiple LoRAs to compose a combined output. The specific calculation for the mixed output is defined as $\vx' = \frac{1}{n} \sum_{i=1}^n \mB_j \mA_j \vx_i$.
    \item \textbf{Task Arithmetic}. This method is akin to weight averaging, but it differentiates by using weights derived from a hyper-parameter search to merge models. The calculations for this composite are $\mA' = p \sum_{i=1}^n \mA_i$ and $\mB' = p \sum_{i=1}^n \mB_i$, where $p$ represents the hyper-parameter that scales the contributions of each model.
    \item \textbf{Ties-Merging}. This method aims to resolve element-wise parameter interference by initially trimming the redundant parameters, retaining only the top-k\% of values based on their magnitude. It then selects the sign vector for the merged model and finally performs a disjoint mean operation. Ties-Merging posits that the primary source of parameter interference arises from inconsistencies in the values of merged parameters, while potentially overlooking issues related to misalignment and knowledge conflict.
\end{enumerate}

\section{Optimal Scale of Merged LoRA}
\label{sec:variance_reweight}

\begin{theorem}
Let $\mA_1 \in \mathbb{R}^{p \times r}$ and $\mB_1 \in \mathbb{R}^{r \times p}$, and $\mA_2 \in \mathbb{R}^{p \times k}$ and $\mB_2 \in \mathbb{R}^{k \times p}$, where all elements of these matrices are independently and identically distributed according to the standard normal distribution $\mathcal{N}(0, 1)$. Then, after scaling the product $\mA_2 \mB_2$ by the factor $\sqrt{r}/\sqrt{k}$, the variances of the entries of $\mA_1 \mB_1$ and the scaled $\mA_2 \mB_2$ are equal:
\[
\Var \left( \mA_1 \mB_1 \right) = \Var \left( \frac{\sqrt{r}}{\sqrt{k}} \mA_2 \mB_2 \right).
\]
\end{theorem}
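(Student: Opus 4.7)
The plan is to reduce the statement to a single-entry variance computation, since every entry of each product matrix is identically distributed and the claim is about variance of a generic entry. I will write out the $(i,j)$ entry of $\mA_1 \mB_1$ as $(\mA_1 \mB_1)_{ij} = \sum_{l=1}^{r} (A_1)_{il} (B_1)_{lj}$, and similarly $(\mA_2 \mB_2)_{ij} = \sum_{l=1}^{k} (A_2)_{il} (B_2)_{lj}$, noting that all the summands inside one product are mutually independent because distinct terms involve distinct rows/columns of the underlying matrices.

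The key step is a single observation: for $X, Y$ independent and standard normal, $\mathbb{E}[XY] = 0$ and $\Var(XY) = \mathbb{E}[X^2 Y^2] = \mathbb{E}[X^2]\mathbb{E}[Y^2] = 1$. From this, each entry $(\mA_1 \mB_1)_{ij}$ is a sum of $r$ independent mean-zero variance-one random variables, hence has variance $r$; likewise each entry of $\mA_2 \mB_2$ has variance $k$. Scaling a random variable by a constant $c$ scales its variance by $c^2$, so
\begin{equation}
\Var\!\left(\frac{\sqrt{r}}{\sqrt{k}}(\mA_2\mB_2)_{ij}\right) = \frac{r}{k}\cdot k = r = \Var\bigl((\mA_1\mB_1)_{ij}\bigr),
\end{equation}
which is exactly the desired equality.

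There is no real obstacle here; the only thing worth stating carefully is the independence justification for splitting the variance of the sum into the sum of variances, and the variance-of-a-product identity for two independent standard normals. I would state both cleanly up front, then execute the one-line computation above. Because the theorem is phrased entrywise (the notation $\Var(\mA_1\mB_1)$ without an index implicitly refers to a generic entry, as all entries share the same marginal distribution by symmetry), no additional argument about the full matrix distribution is required.
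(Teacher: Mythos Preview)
Your proposal is correct and follows essentially the same approach as the paper's own proof: both compute the variance of a generic $(i,j)$ entry by writing it as a sum of $r$ (respectively $k$) independent products of two independent standard normals, each such product having mean zero and variance one, and then apply the $c^2$ scaling rule for variance. Your version is slightly more explicit in justifying $\Var(XY)=1$ via $\mathbb{E}[X^2]\mathbb{E}[Y^2]$ and in clarifying why the entrywise interpretation of $\Var(\mA_1\mB_1)$ suffices, but the argument is otherwise identical.
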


\begin{proof}
To compute the variance of the entries of the matrices $ \mA_1 \mB_1 $ and $ \frac{\sqrt{r}}{\sqrt{k}} \mA_2 \mB_2 $, we examine each entry individually.

For $ \mA_1 \mB_1 $, each entry is calculated as:
\[
(\mA_1 \mB_1)_{ij} = \sum_{l=1}^{r} (\mA_1)_{il} (\mB_1)_{lj}.
\]
Since $ (\mA_1)_{il} $ and $ (\mB_1)_{lj} $ are independent and follow $ \mathcal{N}(0, 1) $, their product has mean zero and variance one:
\[
\E\left[ (\mA_1)_{il} (\mB_1)_{lj} \right] = 0, \quad \Var\left( (\mA_1)_{il} (\mB_1)_{lj} \right) = 1.
\]
The terms $ (\mA_1)_{il} (\mB_1)_{lj} $ are independent for different $ l $, so the variance of $ (\mA_1 \mB_1)_{ij} $ is:
\[
\Var\left( (\mA_1 \mB_1)_{ij} \right) = \sum_{l=1}^{r} \Var\left( (\mA_1)_{il} (\mB_1)_{lj} \right) = r \times 1 = r.
\]

Similarly, for $ \mA_2 \mB_2 $, each entry is:
\[
(\mA_2 \mB_2)_{ij} = \sum_{l=1}^{k} (\mA_2)_{il} (\mB_2)_{lj},
\]
and each term $ (\mA_2)_{il} (\mB_2)_{lj} $ has variance one. Therefore, the variance of $ (\mA_2 \mB_2)_{ij} $ is:
\[
\Var\left( (\mA_2 \mB_2)_{ij} \right) = \sum_{l=1}^{k} \Var\left( (\mA_2)_{il} (\mB_2)_{lj} \right) = k \times 1 = k.
\]
After scaling $ \mA_2 \mB_2 $ by $ \sqrt{r}/\sqrt{k} $, the variance becomes:
\[
\Var\left( \left( \frac{\sqrt{r}}{\sqrt{k}} \mA_2 \mB_2 \right)_{ij} \right) = \left( \frac{\sqrt{r}}{\sqrt{k}} \right)^2 \Var\left( (\mA_2 \mB_2)_{ij} \right) = \left( \frac{r}{k} \right) \times k = r.
\]
Thus, the variances of the entries are equal:
\[
\Var \left( \mA_1 \mB_1 \right) = \Var \left( \frac{\sqrt{r}}{\sqrt{k}} \mA_2 \mB_2 \right).
\]
\end{proof}

\section{Performance on Merging Heterogeneous LoRAs}
\label{sec:hete_lora}
\begin{table}[t]
\caption{Multi-task performance when merging heterogeneous LoRAs on seven seen tasks and two unseen tasks.}
      \vspace{-3mm}
\centering
\resizebox{.9\linewidth}{!}{
\begin{tabular}{l|ccccccc|cc|c}
\toprule
& \multicolumn{7}{c|}{\textbf{IID Tasks}} & \multicolumn{2}{c|}{\textbf{OOD Tasks}} &  \multirow{2}{*}{\begin{tabular}[c]{@{}c@{}}\textbf{Average}\end{tabular}} \\
\textbf{Method} & \textbf{CoLA} & \textbf{MNLI} & \textbf{MRPC} & \textbf{QNLI} & \textbf{QQP} & \textbf{RTE} & \textbf{SST2} & \textbf{SNLI} & \textbf{WNLI} &  \\
\midrule
\multicolumn{11}{c}{\textit{w/ Llama2-7b}} \\\hline
Task LoRA & 61.63 & 77.46 & 68.00 & 82.69 & 75.83 & 77.04 & 77.47 & \diagbox{}{}
 & \diagbox{}{}
 & \diagbox{}{}
 \\ \hline
Weight Average & \diagbox{}{} & \diagbox{}{} & \diagbox{}{} & \diagbox{}{} & \diagbox{}{} & \diagbox{}{} & \diagbox{}{} &\diagbox{}{} & \diagbox{}{} & \diagbox{}{} \\
Task Arithmetic & \diagbox{}{} & \diagbox{}{} & \diagbox{}{} & \diagbox{}{} & \diagbox{}{} & \diagbox{}{} & \diagbox{}{} &\diagbox{}{} & \diagbox{}{} & \diagbox{}{} \\
Ties-Mering & \diagbox{}{} & \diagbox{}{} & \diagbox{}{} & \diagbox{}{} & \diagbox{}{} & \diagbox{}{} & \diagbox{}{} &\diagbox{}{} & \diagbox{}{} & \diagbox{}{} \\ 
Ensemble & \textbf{56.06} & 55.84 & \textbf{69.75} & 64.91 & \textbf{74.85} & \textbf{74.44} & 70.92 & 46.19 & \textbf{52.86} & 62.87 \\ \hline
LoRA-LEGO & 55.10 & \textbf{60.67} & 69.25 & \textbf{67.29} & 65.61 & 67.04 & \textbf{74.83} & \textbf{57.82} & \textbf{52.86} & \textbf{63.39} \\
\bottomrule
\end{tabular}
}
\label{tab:hete_res}
\end{table}

Another advantage of LoRA-LEGO is its ability to merge \textbf{heterogeneous LoRAs}, that is, LoRAs with different ranks. To experimentally verify this feature, we retrained LoRAs for the QNLI, RTE, and SST2 tasks with $r=16$ and $\alpha=32$, and merged them with LoRAs from other tasks ($r=8$, $\alpha=16$) to obtain a new LoRA. Since other model merging methods require the merged LoRAs to have the same architecture, we only compared our method with the Ensemble method. As shown in Tab.\ref{tab:hete_res}, the results demonstrate that our method can effectively merge heterogeneous LoRAs and achieves better overall performance than the Ensemble method.

\end{document}